\documentclass[]{ustc_conference}
\usepackage{amssymb}
\usepackage[utf8]{inputenc}
\usepackage[ruled,vlined]{algorithm2e}
\usepackage{booktabs}
\usepackage{multirow}
\usepackage{xcolor}
\usepackage{adjustbox}
\usepackage[edges]{forest}
\usepackage{tikz}
\usepackage{amsfonts}
\usepackage[toc,page,header]{appendix}
\usepackage{bbm}
\usepackage{amsthm}
\usepackage{enumitem}
\usepackage{mathtools}
\usepackage{wrapfig}
\newtheorem{theorem}{Theorem}
\renewcommand{\paragraph}[1]{\vspace{0.1em}\noindent\textbf{#1}}

\title{SocraticPO: Policy Optimization via Interactive Guidance}

\author{%
\parbox{\textwidth}{\centering
Zirui Liu\frontsup{1}, Tingyue Pan\frontsup{1}, Jie Ouyang\frontsup{1}, Qi Liu\frontsup{1\corrauthor}, Xianquan Wang\frontsup{1}, Jiayu Liu\frontsup{1}, Qingchuan Li\frontsup{1}, Jing Sha\frontsup{2}, Zhenya Huang\frontsup{1}, Shijin Wang\frontsup{2}, Enhong Chen\frontsup{1}
}}

\affiliation{%
\parbox{\textwidth}{\centering
\affilsup{1}State Key Laboratory of Cognitive Intelligence, University of Science and Technology of China\\
\affilsup{2}iFLYTEK AI Research (Central China), iFLYTEK Co., Ltd
}
}

\correspondence{\email{qiliuql@ustc.edu.cn}}

\abstract{
Reinforcement learning (RL) for large language models usually supervises reasoning with scalar outcome rewards, such as binary correctness. Such rewards provide an optimization direction but rarely explain how a model should revise its mistaken reasoning, which can encourage shortcut learning and brittle policies. We propose \textbf{SocraticPO} (Socratic Policy Optimization), a policy-optimization framework that augments RL rollouts with Socratic-style natural-language guidance. During rollout, the student first answers independently; if the answer is incorrect, a teacher diagnoses the attempt and provides concise corrective guidance, after which the student continues under the expanded context. Crucially, this guidance is paired with reward decay: correct answers obtained after teacher intervention only receive decayed rewards, preventing the policy from treating teacher help as a free path to reward. Since SocraticPO only modifies the rollout process while leaving the standard expected-reward objective intact, it can be plugged into existing policy-gradient backends such as Reinforce++. Moreover, because the teacher provides only text-level guidance, SocraticPO can leverage stronger black-box teacher models without requiring access to logits or distribution matching. On undergraduate-level scientific reasoning benchmarks from SciKnowEval, SocraticPO improves over strong RL and self-distillation baselines. Ablations show that both targeted guidance and reward decay are necessary, with reward decay mitigating reliance on assisted correction.
}

\checkdata[SocraticPO]{\url{https://github.com/Liuz-rui/SocraticPO}}

\begin{document}

\maketitle

\begin{figure*}[t]
\centering
\includegraphics[width=0.8\textwidth]{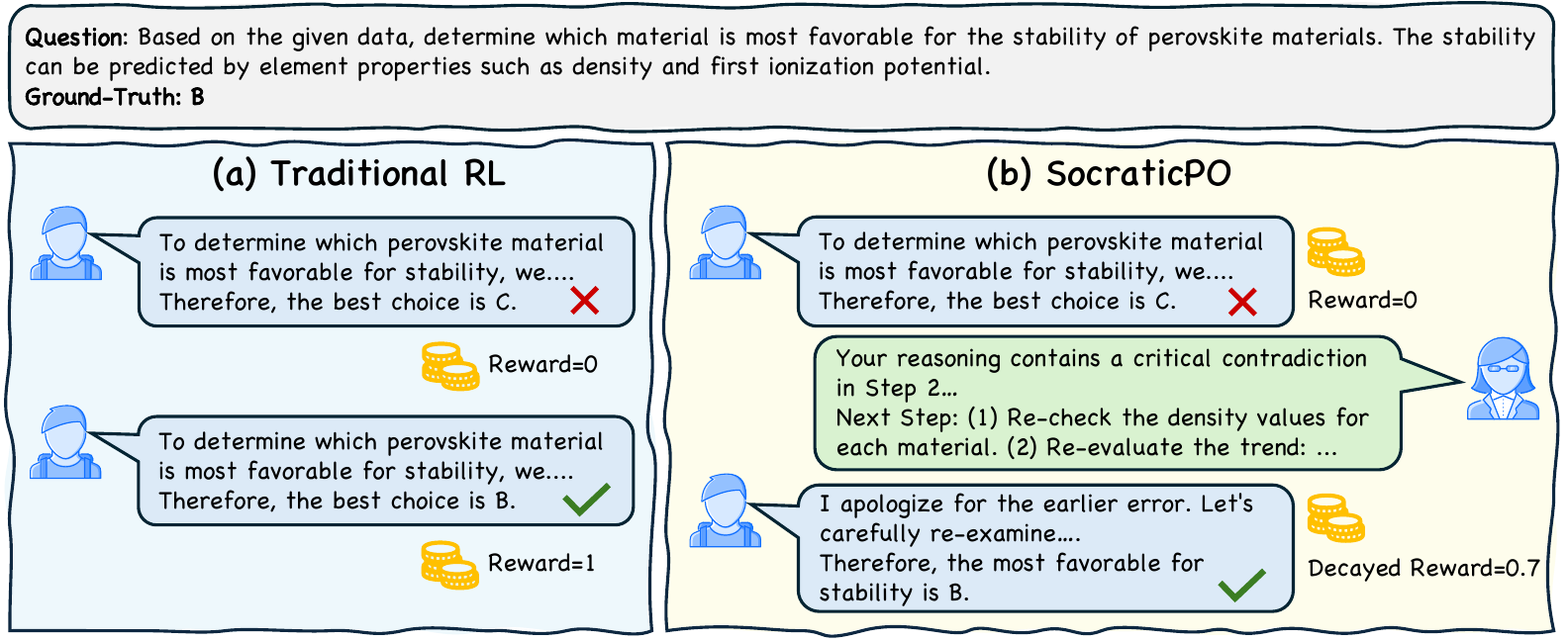}
\caption{Overview of SocraticPO. Conventional RL rollouts provide only scalar outcome feedback after a student response, whereas SocraticPO inserts conditional teacher guidance after failed attempts.}
\label{fig:mdp-formulation}
\end{figure*}

\section{Introduction}

Large language models (LLMs) have advanced rapidly with transformer architectures, scaling, and instruction tuning~\cite{vaswani2017attention,brown2020language,touvron2023llama,achiam2023gpt,bai2023qwen}. Their post-training increasingly relies on reinforcement learning (RL), where a policy is optimized from task rewards or preference-derived signals~\cite{christiano2017deep,stiennon2020learning,ouyang2022training}. For reasoning tasks, however, RL feedback is often reduced to a scalar outcome signal, such as binary correctness. Such feedback tells the model which trajectories should be reinforced, but not how a mistaken reasoning process should be repaired~\cite{cai2025reinforcement}.

This limitation is especially problematic in multi-step scientific reasoning. A wrong final answer may arise from a localized conceptual error, an incorrect comparison, or a misplaced intermediate value. Outcome rewards provide gradient-level optimization pressure, but they do not provide thought-level correction. As a result, the policy may discover shortcuts that satisfy the verifier without learning robust reasoning.

Human learning offers a different pattern of supervision. Teachers rarely provide only a grade; they inspect the learner's attempt, identify the source of the mistake, and give hints, critiques, or next-step suggestions~\cite{wood1976role,vanlehn2011relative}. The Socratic tradition frames this process as guided inquiry rather than direct answer transfer~\cite{scott2006plato,liu2024socraticlm}. Inspired by this view, we introduce \textbf{Socratic Policy Optimization (SocraticPO)}, which brings Socratic-style guidance into the rollout stage of RL. A student first attempts the problem independently. If the attempt is incorrect, a teacher provides natural-language guidance conditioned on the student's response, and the student continues under the expanded context.

Guidance must be introduced carefully. If a response corrected after teacher intervention receives the same reward as one solved independently, the policy may learn a training-time shortcut: wait for help, use the teacher-provided clue, and still collect full reward. Assisted correctness can then improve without a corresponding improvement in unassisted reasoning. SocraticPO therefore pairs teacher guidance with \textbf{reward decay}: correct answers obtained after teacher intervention receive decayed rewards, and the reward decreases as more guidance rounds are needed. This encourages the student not only to reach the correct answer, but also to internalize the correction and solve earlier with less assistance.

SocraticPO changes only the rollout process. The student still optimizes a standard expected-reward objective, while teacher utterances are appended as text context for subsequent student attempts rather than optimized as imitation targets. As a result, SocraticPO can be instantiated with existing policy-gradient backends such as Reinforce++~\cite{hu2025reinforce}. Since the teacher is required only to produce text-level guidance, the framework can also leverage stronger black-box teacher models without accessing logits, token-level likelihoods, or matching teacher and student distributions, unlike probability-matching approaches such as OPD~\cite{agarwal2024policy,fu2026revisiting}.

We evaluate SocraticPO on undergraduate-level scientific reasoning benchmarks derived from SciKnowEval~\cite{feng2024sciknoweval}. Compared with strong RL baselines, including Reinforce++~\cite{hu2025reinforce} and GRPO~\cite{shao2024deepseekmath}, as well as self-distillation-style training such as SDPO~\cite{hubotter2026reinforcement}, SocraticPO achieves the best average performance. Ablations further show that both components are necessary: targeted guidance provides useful correction beyond blind retrying, while reward decay mitigates reliance on teacher-assisted correction.

Our main contributions are:
\begin{itemize}[leftmargin=*,itemsep=0.15em,parsep=0pt,topsep=0.2em]
    \item \textbf{A general SocraticPO framework.} We propose an interactive policy-optimization framework that augments RL with teacher-guided reasoning correction and remains compatible with existing policy-gradient backends.
    \item \textbf{Reward decay for SocraticPO.} To make teacher-guided RL train the student rather than reward dependence on assistance, we introduce a reward decay mechanism for assisted correctness. We further characterize its normalized-advantage properties, showing non-negativity, monotonicity, and saturation decay.
    \item \textbf{Black-box teacher compatibility.} Our method only requires teachers to provide text-level guidance instead of distribution matching, making it compatible with stronger API-based or otherwise black-box teachers.
    \item \textbf{Empirical evidence and analysis.} We show gains on scientific reasoning benchmarks and provide ablations on teacher guidance, reward decay, teacher-side information, and teacher--student similarity.
\end{itemize}

\section{Related Work}
\label{sec:related_work}

\subsection{Reinforcement Learning for LLMs}

Reinforcement learning has become a central paradigm for aligning LLMs with human preferences and task rewards~\cite{christiano2017deep,ziegler2019fine,stiennon2020learning,ouyang2022training,li2026reasoningtoolusecompeteagentic}. PPO~\cite{schulman2017proximal} is widely used in RLHF pipelines, and feedback can also be provided by AI systems rather than direct human annotation~\cite{bai2022constitutional}. More recent LLM training methods adopt simpler or more scalable policy-gradient variants, including REINFORCE-style updates~\cite{williams1992simple,sutton1999policy,hu2025reinforce} and group-wise relative objectives such as GRPO~\cite{shao2024deepseekmath}.

Most of these methods optimize scalar reward signals~\cite{guo2026treebased}. In reasoning tasks, such rewards identify whether a trajectory should be reinforced but do not explain which part of the reasoning process should be corrected~\cite{tan2025bottom,cai2026vi}. Process-supervision work shows that intermediate feedback can be more informative than outcome supervision alone~\cite{uesato2022solving}, yet how to inject corrective information into on-policy RL rollouts remains underexplored. SocraticPO addresses this gap by adding natural-language correction to rollout sampling while retaining a standard policy-gradient objective.

\subsection{Human Guided Teaching}

Educational research emphasizes that effective teaching is interactive rather than purely evaluative. Tutors scaffold learning by inspecting a student's current attempt, diagnosing misconceptions, and providing hints or next-step guidance~\cite{wood1976role,vanlehn2011relative}. The Socratic tradition similarly frames learning as guided inquiry: questions and critiques help the learner revise an answer without simply copying an expert solution~\cite{scott2006plato}. Recent NLP work has explored Socratic tutoring and personalized teaching dialogues with LLMs~\cite{liu2024socraticlm}.

SocraticPO adapts this teaching principle to RL training. The teacher does not merely label an answer as wrong; it provides language-level correction conditioned on the student's failed attempt. At the same time, SocraticPO differs from tutoring systems whose goal is to optimize dialogue quality. Our goal is to improve the student policy through RL, so teacher guidance is paired with a reward design that encourages the student to solve earlier and rely less on future intervention.

\subsection{Student--Teacher Paradigms in LLMs}

Teacher--student learning is widely used in LLM post-training to transfer capability from stronger models to smaller or weaker students. One common paradigm is teacher-generated supervised fine-tuning (SFT), where a stronger model produces instructions, answers, or reasoning traces that are then treated as demonstration data for the student. Self-Instruct, Alpaca, WizardLM, Orca, and step-by-step distillation all follow this broad pattern~\cite{wang2023selfinstruct,taori2023alpaca,xu2023wizardlm,mukherjee2023orca,hsieh2023distilling}. Recent work also studies how to synthesize student-consistent SFT data from a teacher so that the demonstrations better match the student's learning state~\cite{huang2026fine}.

Another paradigm is knowledge or policy distillation, where the student is trained to approximate a teacher's behavior more directly. Classical distillation matches teacher distributions or soft targets~\cite{hinton2015distilling}, while on-policy distillation adapts this idea to LLMs by using model-generated or teacher-shaped trajectories during training~\cite{agarwal2024policy,song2026survey,fu2026revisiting}. Self-distillation and hybrid distillation variants further use the model's own successful responses, privileged trajectories, or routed samples as training targets~\cite{hubotter2026reinforcement,li2026srpo,ding2026hdpo,ye2026opcd,tu2026ucob}. Despite differences in data source and objective, these methods generally ask the student to imitate, approximate, or absorb teacher-like behavior.

SocraticPO adopts a different teacher--student relation. The teacher is not a demonstration generator whose answer should be copied, nor a distributional target whose probabilities should be matched. Instead, it acts as an interactive guide: after a failed student attempt, the teacher diagnoses the mistake and provides concise corrective guidance, while the student still produces its own subsequent response and is optimized only through task reward. Because this guidance is text-level, SocraticPO can also use black-box teachers as an additional benefit of the design.

\section{Preliminaries}
\label{sec:preliminaries}

We first introduce the standard policy-gradient formulation for single-turn LLM training, then use it to clarify what SocraticPO changes: the rollout distribution rather than the expected-reward objective.

\subsection{Language Model Policy}

Given an input prompt $x$, an autoregressive language model with parameters $\theta$ defines a policy $\pi_\theta(y \mid x)$ over response sequences $y=(y_1,\dots,y_T)$. The response probability factorizes token by token:
\begin{equation}
\pi_\theta(y \mid x)=\prod_{t=1}^{T}\pi_\theta(y_t \mid x,y_{<t}).
\end{equation}
In the standard single-turn setting, one sampled response is treated as a trajectory $\tau=y$ and assigned a scalar reward $R(\tau)$, such as a correctness score.

\subsection{Expected-Reward Objective}

Policy optimization maximizes the expected reward under the current policy~\cite{williams1992simple,sutton1999policy}:
\begin{equation}
\mathcal{J}(\theta)=
\mathbb{E}_{\tau\sim\pi_\theta(\cdot\mid x)}[R(\tau)].
\end{equation}
For a sequence-level reward, the corresponding policy-gradient estimator can be written as:
\begin{equation}
\nabla_\theta \mathcal{J}(\theta)
=
\mathbb{E}_{\tau\sim\pi_\theta}
\left[
R(\tau)\sum_{t=1}^{T}
\nabla_\theta \log \pi_\theta(y_t\mid x,y_{<t})
\right].
\end{equation}
Practical implementations often add baselines, reward normalization, clipping, or group-wise comparison to reduce variance and stabilize training~\cite{schulman2017proximal,shao2024deepseekmath,hu2025reinforce}.

\subsection{Rollout Sampling}

The rollout distribution determines which trajectories the policy learns from. In conventional LLM RL, the rollout is sampled only from the student policy:
\begin{equation}
\tau \sim \pi_\theta(\cdot\mid x).
\end{equation}
The reward is then computed after generation, and the same trajectory is used for the policy update.

\subsection{Verifier and Correctness Notation}

For each prompt $x$, let $\mathcal G$ denote the information used by the verifier, such as the ground-truth answer and optional reference solution. The teacher may also receive side information $\mathcal I$, which can be empty, restricted, or include reference material depending on the teaching protocol. We define a correctness function
\begin{equation}
\delta(y,\mathcal G)=\mathbb{I}[\text{correct}(y,\mathcal G)].
\end{equation}



\section{Method}
\label{sec:method}

Traditional scalar rewards only tell the model whether an answer is correct or incorrect. They provide an optimization signal, but not a diagnosis of the reasoning error that caused the failure. In human learning, a teacher often plays this diagnostic role: by pointing out where the student's current attempt goes wrong, the teacher can help the student revise more efficiently than a grade alone. SocraticPO turns this idea into a policy-optimization mechanism. It modifies the rollout stage by inserting conditional teacher feedback after failed student attempts, so that the policy learns from trajectories that contain both the original mistake and a correction-oriented context for the next attempt.

\subsection{Interactive Rollout}

Using the notation from Section~\ref{sec:preliminaries}, the rollout is organized as a sequence of student attempts rather than a single long generation. This step-wise structure gives the verifier a chance to identify failure points and allows the teacher to respond to the student's actual mistake, instead of providing generic hints before the model has committed to a line of reasoning.
At each interaction step $k$ with $1\leq k\leq K$, the dialogue context is denoted by $c^{(k)}$, with $c^{(1)}=x$. The student then generates a complete response:
\begin{equation}
y^{(k)}=(y^{(k)}_1,\dots,y^{(k)}_{T_k})
\sim \pi_\theta(\cdot\mid c^{(k)}).
\end{equation}
Equivalently, the response probability factorizes at the token level as
\begin{equation}
\pi_\theta(y^{(k)}\mid c^{(k)})
=
\prod_{t=1}^{T_k}
\pi_\theta(y^{(k)}_t\mid c^{(k)},y^{(k)}_{<t}).
\end{equation}
The response is then evaluated by
\begin{equation}
\delta^{(k)}=\delta(y^{(k)},\mathcal G).
\end{equation}
If $\delta^{(k)}=1$, the rollout terminates. If $\delta^{(k)}=0$ and $k<K$, the teacher is queried with a teaching prompt that includes the current dialogue context, the student's latest response, and the teacher-side information $\mathcal I$. The teacher then generates concise natural-language feedback, such as an error diagnosis, a correction hint, or a suggested next step. We provide the prompt template in Appendix~\ref{app:prompt_templates}. Formally,
\begin{equation}
f^{(k)}\sim \pi_\phi(\cdot\mid c^{(k)},y^{(k)},\mathcal I),
\end{equation}
and the next context is constructed as
\begin{equation}
c^{(k+1)}=c^{(k)} || y^{(k)} || f^{(k)}.
\end{equation}
Thus teacher feedback appears only after an incorrect student response. A sampled interaction trajectory can be written as
\begin{equation}
\tau=
\left(
x,y^{(1)},f^{(1)},\dots,
f^{(L-1)},y^{(L)}
\right),
\end{equation}
where $L\le K$ is the number of student attempts actually made, and $f^{(k)}$ is present only when $\delta^{(k)}=0$ and $k<L$. This defines an interaction-conditioned rollout distribution
\begin{equation}
\tau\sim \pi_{\theta,\phi}^{\mathrm{int}}(\cdot\mid x,\mathcal G,\mathcal I,K).
\end{equation}
Only the student factors in this distribution depend on $\theta$; teacher outputs are sampled conditioning variables.

\subsection{Policy Optimization}

We keep the expected-reward objective but evaluate it under the interaction-conditioned rollout distribution. This preserves compatibility with standard policy-gradient training: SocraticPO changes which trajectories are sampled, not the principle that the student is reinforced by task reward.
\begin{equation}
\mathcal J(\theta)
=
\mathbb{E}_{\tau\sim \pi_{\theta,\phi}^{\mathrm{int}}}
\left[R(\tau)\right].
\end{equation}
In our step-wise rollout, each generated student response receives a reward $r^{(k)}$, and the trajectory return is
\begin{equation}
R(\tau)=\sum_{k=1}^{L} r^{(k)}.
\end{equation}
Using step-wise rewards is important for within-trajectory credit assignment\cite{wang2026steppo}. A final trajectory-level reward would mark the whole interaction as successful once the last answer is correct, but it would not identify which student attempt should receive credit. Step-wise rewards instead attach the learning signal to the specific response that becomes correct and leave earlier failed attempts unrewarded. This separates independent success, recovery after one targeted hint, and recovery after repeated intervention inside the same trajectory.
The policy-gradient update is applied only to student tokens:
\begin{equation}
\nabla_\theta \mathcal J(\theta)
\approx
\sum_{k=1}^{L}
r^{(k)}
\sum_{t=1}^{T_k}
\nabla_\theta
\log \pi_\theta(y^{(k)}_t\mid c^{(k)},y^{(k)}_{<t}).
\end{equation}

\subsection{Reward Decay}
\begin{figure*}[t]
\centering
\includegraphics[width=\textwidth]{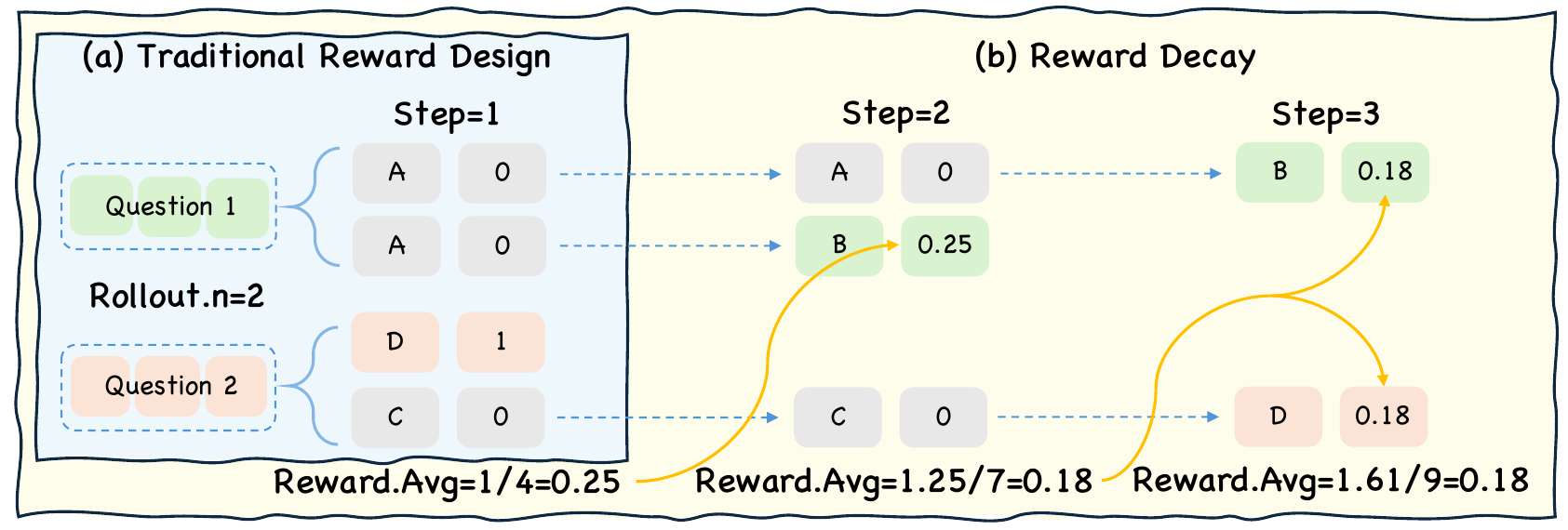}
\caption{Step-wise reward and reward decay in SocraticPO. Step-wise rewards assign credit to the specific student attempt that becomes correct within an interaction trajectory, while reward decay calibrates credit across the batch by assigning smaller rewards to answers that require more rounds of teacher guidance.}
\vspace{-5pt}
\label{fig:credit-assignment}
\end{figure*}

Step-wise rewards alone, however, do not solve credit assignment across samples in the same batch. If every correct response receives the same reward, then a problem solved independently at step $1$ and another problem solved only after several rounds of guidance would contribute the same positive reward, even though they reflect different degrees of student competence. This creates an assisted reward-hacking failure mode: the student can benefit from teacher intervention during training without learning to solve earlier or independently. We discuss this situation in Section~\ref{sec:reward_hacking_analysis}. Therefore, after locating credit within each trajectory, SocraticPO further calibrates credit across the batch by decaying rewards for later-step correctness.

Consider a training batch. Let $\mathcal A_k$ be the set of samples that are active at step $k$, meaning they have not been solved before step $k$. For sample $i\in\mathcal A_k$, let
\begin{equation}
\delta_i^{(k)}=
\mathbb{I}[\text{correct}(y_i^{(k)},\mathcal G_i)].
\end{equation}
The first-step reward is
\begin{equation}
r_i^{(1)}=\delta_i^{(1)}.
\end{equation}
For $k\ge 2$, we compute the historical average reward over previous active attempts:
\begin{equation}
\mu_{k-1}
=
\frac{
\sum_{j=1}^{k-1}\sum_{i\in\mathcal A_j}r_i^{(j)}
}{
\sum_{j=1}^{k-1}|\mathcal A_j|
},
\end{equation}
and define
\begin{equation}
r_i^{(k)}
=
\mu_{k-1}\cdot \delta_i^{(k)},\qquad i\in\mathcal A_k.
\end{equation}
Incorrect responses always receive zero reward. Correct responses at later steps receive a reward scaled by the average success value of earlier attempts, which makes delayed correction less valuable than immediate success. This reward rule gives the following properties and the proof is provided in Appendix~\ref{app:proof_advantage}:
\begin{algorithm}[t]
\caption{Socratic Policy Optimization with Conditional Teacher Intervention}
\label{alg:spo}
\KwIn{Student policy $\pi_\theta$; teacher policy $\pi_\phi$; prompt dataset $\mathcal D$; verifier information $\mathcal G$; teacher-side information $\mathcal I$; maximum interaction steps $K$; batch size $B$.}
\KwOut{Optimized student policy $\pi_\theta$.}
\textbf{repeat}
Sample a batch $\{(x_i,\mathcal G_i,\mathcal I_i)\}_{i=1}^{B}\sim\mathcal D$\;
Initialize $c_i^{(1)}\gets x_i$, active set $\mathcal A_1\gets\{1,\dots,B\}$, and trajectory buffer $\mathcal T\gets\emptyset$\;
\For{$k=1$ \KwTo $K$}{
    \ForEach{$i\in\mathcal A_k$}{
        Generate $y_i^{(k)}\sim\pi_\theta(\cdot\mid c_i^{(k)})$;\quad
        Compute $\delta_i^{(k)}\gets\mathbb{I}[\text{correct}(y_i^{(k)},\mathcal G_i)]$;\quad
        Store $(c_i^{(k)},y_i^{(k)},\delta_i^{(k)})$ in $\mathcal T$\;
        \If{$\delta_i^{(k)}=0$ \textbf{and} $k<K$}{
            Generate $f_i^{(k)}\sim\pi_\phi(\cdot\mid c_i^{(k)},y_i^{(k)},\mathcal I_i)$;\quad Set $c_i^{(k+1)}\gets c_i^{(k)} || y_i^{(k)} || f_i^{(k)}$\;
        }
    }
    Compute rewards $\{r_i^{(k)}: i\in\mathcal A_k\}$ using the step-wise rule\;
    Set $\mathcal A_{k+1}\gets\{i\in\mathcal A_k:\delta_i^{(k)}=0\}$\;
    \If{$\mathcal A_{k+1}=\emptyset$}{
        \textbf{break}\;
    }
}
Update $\pi_\theta$ with Reinforce++ using student responses in $\mathcal T$ and their rewards\;
\textbf{until convergence}
\end{algorithm}

\begin{theorem}[Assistance-Aware Advantage under Reward Decay]
\label{thm:assistance_advantage}
Define the normalized advantage
\begin{equation}
\hat a_i^{(k)}=\frac{r_i^{(k)}-\mu_t}{\sigma_t+\epsilon},
\end{equation}
where $\mu_t,\sigma_t$ are the mean and standard deviation after step $t$ with $1\le t\le k\le K$. Then the following properties hold:
\begin{enumerate}[leftmargin=*]
    \item \textbf{(Non-negativity)} If sample $i$ is correct at step $k$, i.e., $\delta_i^{(k)}=1$, then $\hat a_i^{(k)}\ge 0$.
    \item \textbf{(Monotonicity)} If $t\geq 2$ and samples $i,j$ are correct at step $k,l(k<l)$, i.e., $\delta_i^{(k)}=\delta_j^{(l)}=1$, then $\hat a_i^{(k)}>\hat{a}_j^{(l)}$.
    \item \textbf{(Saturation decay)} Let
$
p_k=\frac{1}{|\mathcal A_k|}\sum_{i\in\mathcal A_k}\delta_i^{(k)}
$
be the correction rate among active samples. Then if $p_t\rightarrow 1$, we have $\hat{a}_i^{(t)}\rightarrow 0$.
\end{enumerate}
\end{theorem}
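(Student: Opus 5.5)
The whole argument will go through the recursive definition of the decayed rewards. The normalization in the statement uses $\mu_t=\frac{\sum_{j=1}^{t}\sum_{i\in\mathcal A_j}r_i^{(j)}}{\sum_{j=1}^{t}|\mathcal A_j|}$, the same pooled mean that defines the rewards, with $\sigma_t$ the matching pooled standard deviation. First I would record the structural facts that follow by induction on the step. Every reward lies in $[0,1]$, so every $\mu_j$ lies in $[0,1]$. Correct responses at step $k$ all receive the same value $\rho_k$, where $\rho_1=1$ and $\rho_k=\mu_{k-1}$ for $k\ge2$. Incorrect responses receive $0$.

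The key lemma is that $\mu_t\le\rho_{k}$ for every $k\le t+1$. Equivalently, the sequence $\rho_1\ge\rho_2\ge\cdots$ is nonincreasing and each pooled mean is at most the current correct value. I would prove it by induction. $\mu_1$ is an average of numbers in $\{0,\rho_1\}$, so $\mu_1\le\rho_1$, and $\rho_2=\mu_1$. Suppose $\rho_1\ge\dots\ge\rho_{t}$ and $\mu_{t-1}=\rho_t$. Then $\mu_t$ is a convex combination of $\mu_{t-1}$, with weight $\sum_{j<t}|\mathcal A_j|$, and of the step-$t$ average, which lies in $[0,\rho_t]$. Hence $\mu_t\le\rho_t$, and so $\rho_{t+1}=\mu_t\le\rho_t$.

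\textbf{Non-negativity.} This follows from the key lemma. I interpret the statement as comparing rewards from steps $k\le t+1$; the natural reading is that $\mu_t$ is the statistic used up to the step being evaluated. A correct sample at step $k$ has reward $\rho_k\ge\rho_{t+1}=\mu_t$. Since $\sigma_t+\epsilon>0$, this gives $\hat a_i^{(k)}\ge0$.

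\textbf{Monotonicity.} Both advantages share the same normalizer, so it suffices to show $\rho_k>\rho_l$ for $k<l$. Weak inequality is already given by the key lemma. For strictness I need that each update strictly decreases the value. I would argue that $\mu_{j}<\rho_j$ unless every active sample up to step $j$ was correct. That cannot happen at any step that has a successor, since a step with a successor has failures. Therefore $\rho_{j+1}=\mu_j<\rho_j$ whenever step $j+1$ is reached. The condition $t\ge2$ guarantees we are past the first step, so the comparison involves at least one decayed reward. This strictness step is the main obstacle. It needs a non-degeneracy assumption (some failure at step $1$, which holds whenever step $2$ exists because $\mathcal A_2\neq\emptyset$), and I would state it explicitly.

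\textbf{Saturation decay.} I would compute directly. If $p_t\to1$, all active samples at step $t$ are correct with reward $\rho_t=\mu_{t-1}$. Then $\mu_t$ is a convex combination of $\mu_{t-1}$ and the step-$t$ average, which tends to $\rho_t=\mu_{t-1}$. So $\mu_t\to\mu_{t-1}=\rho_t$, and the numerator $r_i^{(t)}-\mu_t\to0$. The denominator is bounded below by $\epsilon>0$, so $\hat a_i^{(t)}\to0$. The same holds for incorrect samples, whose fraction vanishes. For $t=1$ the argument is the same, using $\rho_1=1$ and $\mu_1\to1$.
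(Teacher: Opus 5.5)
Your argument follows the paper's proof closely. The paper also uses the recursion $\mu_k=\mu_{k-1}\frac{N_{k-1}+m_kp_k}{N_{k-1}+m_k}$, which is your convex-combination step. It likewise uses the fact that a correct step-$k$ sample receives $\mu_{k-1}$, cancels the shared normalizer for monotonicity, and gets saturation from $\mu_{t-1}-\mu_t=\frac{\mu_{t-1}m_t}{N_{t-1}+m_t}(1-p_t)$. Your induction that $\rho_1\ge\rho_2\ge\cdots$ is cleaner than the paper's closed form for $\mu_t$. That closed form treats every later correct reward as $\mu_{k-1}$, so it is really only an upper bound. Your range $k\le t+1$ is also the right one. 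The paper's proof implicitly takes $k\le t$, and under the literal $t\le k$ the numerator $\mu_{k-1}-\mu_t$ is typically negative once $t\le k-2$.

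The step that fails is strictness in monotonicity. For $j\ge2$ you have $\mu_j=\rho_j\frac{N_{j-1}+m_jp_j}{N_{j-1}+m_j}$, so $\mu_j<\rho_j$ needs $\rho_j>0$ as well as $p_j<1$; a failure at step $j$ alone is not enough. Since $\rho_2=\mu_1=p_1$ and all later factors are positive, $\rho_j>0$ for $j\ge2$ holds exactly when $p_1>0$. If nothing is solved at step $1$, every decayed reward is $0$. Then $\mu_t=\sigma_t=0$ and $\hat a_i^{(k)}=\hat a_j^{(l)}=0$ for all $2\le k<l$, so the strict inequality is false and no argument can give it.

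The non-degeneracy condition you propose (a failure at step $1$) is automatic and is not the one that matters. You need at least one success at step $1$, i.e.\ $\mu_1>0$, which is automatic only when $k=1$. The paper's proof has the same hole: it concludes $\mu_{k-1}(1-\cdots)>0$ without checking $\mu_{k-1}>0$. The fix is to add this hypothesis explicitly.

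Separately, drop the remark that saturation also holds for incorrect samples. Their numerator is $0-\mu_t\to-\mu_{t-1}$ while the denominator stays bounded, so $\hat a_i^{(t)}$ does not tend to $0$ unless $\mu_{t-1}=0$. Property 3, as in the paper's proof, concerns correct samples only.
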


The first property shows that SocraticPO does not punish corrected reasoning. Once a sample is answered correctly, the corresponding student tokens receive a non-negative normalized advantage, even if the answer is obtained after teacher guidance. Thus, \textbf{reward decay does not discard assisted successes}; it keeps them as valid positive learning signals.

The second property formalizes the batch-level credit assignment discussed above. Among samples that eventually become correct within the same batch, SocraticPO gives a larger normalized advantage to those corrected at earlier steps. This prevents an independently solved problem and a problem solved only after multiple rounds of guidance from receiving indistinguishable credit. In this sense, \textbf{reward decay calibrates positive reward across samples by favoring trajectories that become correct with fewer guidance rounds}.

The third property explains how reward decay mitigates assisted reward hacking. If teacher guidance makes almost all active samples correct, then $p_k\rightarrow 1$ and the advantage approaches zero. Once a guidance round becomes non-discriminative because nearly everyone can solve the task with that amount of help, trajectories from that round no longer provide strong positive reinforcement. This means that \textbf{reward decay can prevent the student policy from gaining excessive advantage merely by relying on teacher intervention}.


\section{Experiments}
\label{sec:experiments}

We evaluate SocraticPO on scientific reasoning tasks. The experiments are designed to answer two main questions: (1) whether SocraticPO improves over standard RL and student--teacher baselines; and (2) whether its two key components, teacher guidance and reward decay, are both necessary for effective learning. We further include diagnostic analyses on teacher-side information and teacher--student similarity to better understand when the interaction mechanism helps.

\subsection{Experimental Setup}

\paragraph{Models.}
We use \textbf{Qwen3-4B-Instruct} as the student policy $\pi_\theta$. For teacher guidance, we consider two settings: \textbf{Qwen3-4B-Instruct} as a self-guidance teacher and \textbf{Qwen3.5-27B} as a stronger teacher. This allows us to separate the effect of the interaction mechanism from the effect of teacher capacity.

\paragraph{Datasets and metric.}
Following the SDPO evaluation setup, we use four undergraduate-level scientific reasoning benchmarks from the reasoning subsets of \textbf{SciKnowEval}~\cite{feng2024sciknoweval}: \textbf{Chemistry}, \textbf{Physics}, \textbf{Biology}, and \textbf{Material}. We report accuracy on each domain and the unweighted average across domains.

\begin{table}[t]
  \caption{Accuracy on SciKnowEval reasoning benchmarks. SocraticPO variants specify the teacher model in parentheses. Bold indicates the best result in each column, and underline indicates the second-best result.}
  \label{tab:main_results}
  \centering
  \vspace{5pt}
  \resizebox{0.86\textwidth}{!}{
  \begin{tabular}{lcccccc}
    \toprule
    Method  & Chemistry & Physics & Biology & Material & Avg. \\
    \midrule
    Qwen3-4B-Instruct  & 0.5071 & 0.6664 & 0.2762 & 0.7413 & 0.5478 \\
    +Reinforce++  & 0.7270 & 0.7726 & 0.6575 & 0.7978 & 0.7388 \\
    +GRPO  & 0.7657 & \underline{0.7773} & 0.6137 & 0.7712 & 0.7320 \\
    +GSPO  & 0.7633 & 0.7697 & 0.5875 & 0.7849 & 0.7262 \\
    +SDPO  & \textbf{0.7961} & 0.7742 & 0.5987 & 0.7779 & 0.7367 \\
    \midrule
    +SocraticPO(Qwen3-4B) & 0.7672 & 0.7648 & \underline{0.6612} & \underline{0.8058} & \underline{0.7498} \\
    +SocraticPO(Qwen3.5-27B) & \underline{0.7782} & \textbf{0.7789} & \textbf{0.6787} & \textbf{0.8138} & \textbf{0.7624} \\
    \bottomrule
  \end{tabular}}
\end{table}
\begin{figure}[t]
    \centering
    \includegraphics[width=0.8\linewidth]{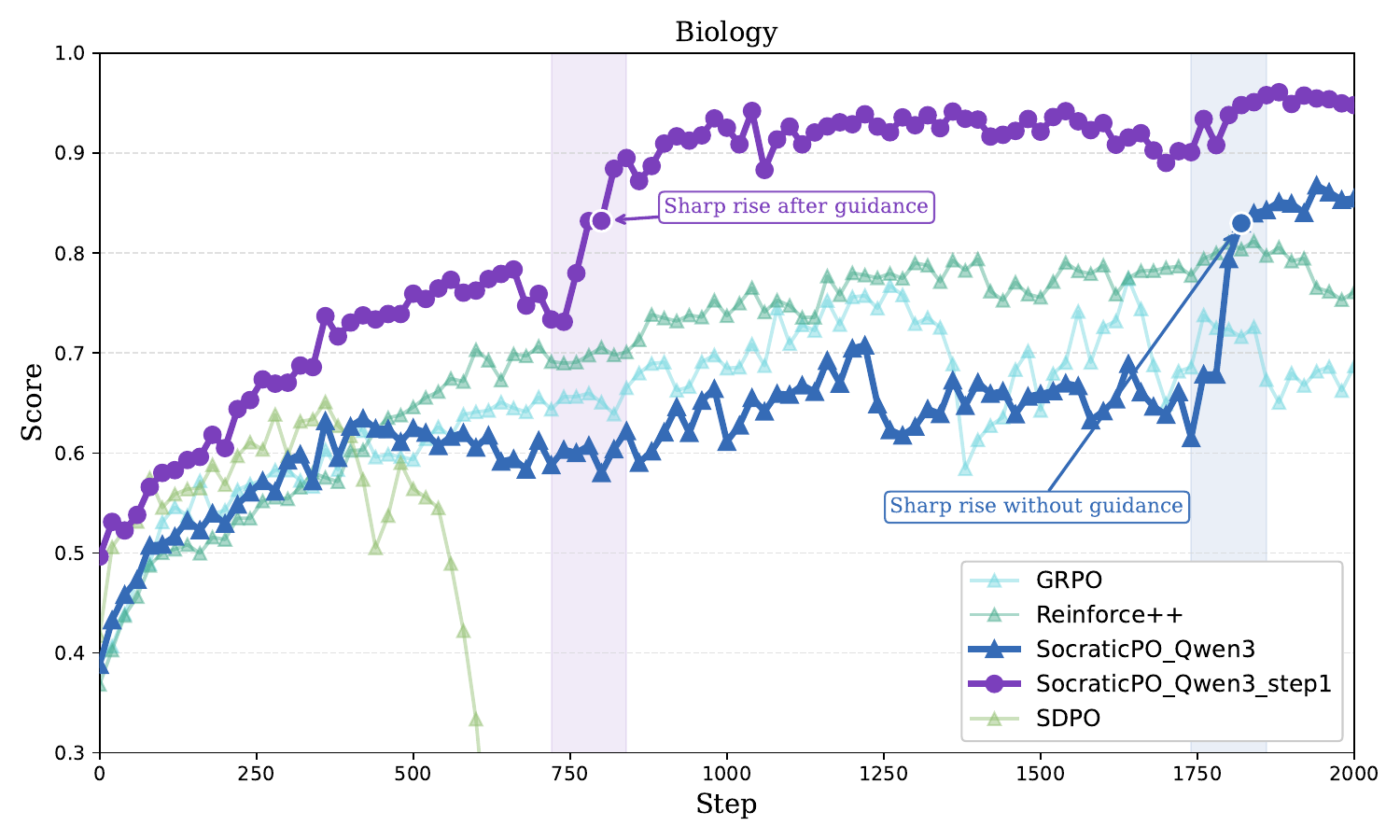}
    \caption{Training dynamics of SocraticPO compared with baseline policy optimization. The curve illustrates how interaction-conditioned rollouts affect performance over training steps.}
    \label{fig:train_step}
\end{figure}
\paragraph{Baselines.}
We compare against \textbf{Reinforce++}~\cite{hu2025reinforce}, \textbf{GRPO}~\cite{shao2024deepseekmath}, \textbf{GSPO}~\cite{zheng2025group}, and \textbf{SDPO}~\cite{hubotter2026reinforcement}. Reinforce++, GRPO and GSPO represent RL methods driven by scalar rewards, while SDPO represents a self-distillation-style baseline.

\subsection{Main Results}
Table~\ref{tab:main_results} reports the main results. With Qwen3.5-27B as the teacher, SocraticPO achieves the best average accuracy, improving from 0.7388 with Reinforce++ to 0.7624. It also obtains the best score on Physics, Biology, and Material, suggesting that Socratic guidance is especially useful when errors can be localized and corrected through intermediate reasoning feedback. Chemistry is the main exception, where SDPO achieves the highest single-domain score. Under self-guidance, SocraticPO remains competitive and reaches an average accuracy of 0.7498, which is still higher than Reinforce++ and GRPO. This indicates that the interaction mechanism itself contributes to learning, while a stronger teacher further amplifies the benefit.

In Figure~\ref{fig:train_step}, we observe a two-stage pattern in the training dynamics. In the early stage, the student rapidly improves its ability to answer correctly after receiving teacher guidance, as reflected by the guided-performance curve. This is consistent with the role of interaction-conditioned rollouts: the model sees its own failed attempts followed by targeted corrective feedback, and first learns how to use guidance to repair its reasoning. In the later stage, reward decay gradually reduces the advantage of answers that are correct only after guidance, while unassisted correct answers receive relatively stronger credit. As a result, the learning pressure shifts from ``being correct with help'' to ``being correct earlier and without help.'' Combined with the correction ability learned in the early stage, this shift eventually improves the student's unassisted performance as well. This pattern supports our design intuition: teacher guidance creates learnable repairs, and reward decay encourages the student to internalize those repairs into independent reasoning.

\subsection{Ablation Study}
\label{sec:ablation}
\label{sec:reward_hacking_analysis}

\begin{table}[t]
  \caption{Ablation results for teacher guidance and reward decay.}
  \label{tab:ablation_results}
  \centering
  \vspace{5pt}
  \resizebox{0.8\textwidth}{!}{
  \begin{tabular}{ccccccc}
    \toprule
     Teacher & Method & Chemistry & Physics & Biology & Material & Avg. \\

    \midrule
    -- & w/o tg & 0.7110 & 0.7703 & 0.5737 & 0.7706 & 0.7064 \\
    \midrule
    \multirow{3}{*}{Qwen3-4B} & w/o rd & 0.7586 & 0.7468 & 0.5837 & 0.7859 & 0.7188 \\
    & 0.8 rd  & 0.7661 & 0.7753& 0.6400  & 0.7812 & 0.7406 \\
     & --  & 0.7672 & 0.7648 & 0.6612 & 0.8058 & 0.7498 \\
     \midrule
    \multirow{3}{*}{Qwen3.5-27B} & w/o rd & 0.7053 & 0.7484 & 0.5700 & 0.7992 & 0.7057 \\
    & 0.8 rd  & 0.7487 & 0.7695 & 0.5909 & 0.7945 & 0.7259 \\
     & -- & \textbf{0.7782} & \textbf{0.7789} & \textbf{0.6787} & \textbf{0.8138} & \textbf{0.7624} \\
    \bottomrule
  \end{tabular}}
\end{table}

\begin{wrapfigure}{r}{0.46\textwidth}
    \vspace{-2.8em}
    \centering
    \includegraphics[width=\linewidth]{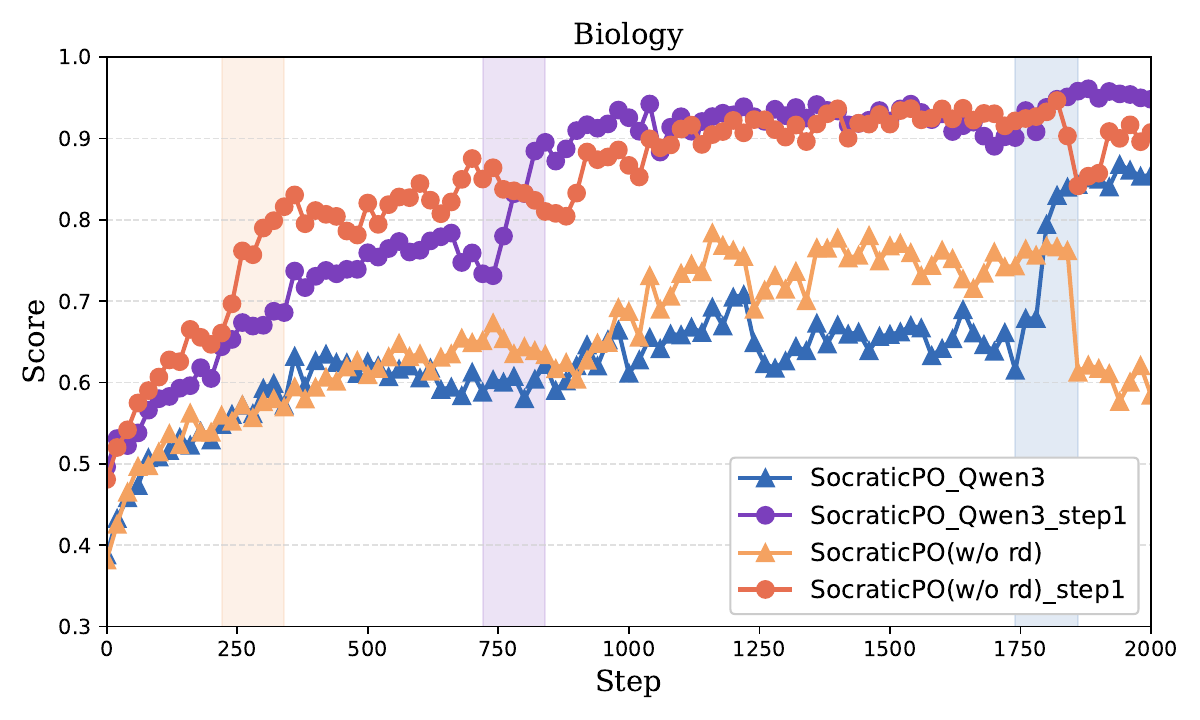}
    \vspace{-1.8em}
    \caption{Ablation comparison for teacher guidance and reward decay.}
    \label{fig:ablation}
    \vspace{-1.8em}
\end{wrapfigure}
We evaluate three variants of SocraticPO. \textbf{SocraticPO (w/o tg)} removes adaptive teacher guidance and replaces it with a fixed retry prompt. \textbf{SocraticPO (w/o rd)} keeps teacher guidance but removes reward decay, assigning full reward to any correct answer regardless of the interaction step. \textbf{SocraticPO (0.8 rd)} replaces our step-dependent reward decay with a fixed exponential decay factor of 0.8.

Table~\ref{tab:ablation_results} separates the component analysis from the main comparison. Removing teacher guidance and using a fixed retry prompt underperforms full SocraticPO, especially on Biology and Material. This shows that the improvement is not explained by repeated attempts alone: the teacher must provide feedback conditioned on the student's actual mistake so that the next rollout context contains useful correction.

Reward decay is even more critical. As visualized in Figure~\ref{fig:ablation}, removing reward decay can make the guided ability improve faster in the early stage, as shown by the orange curve. This is expected: once corrected answers after guidance receive the same reward as independent correct answers, the model is strongly incentivized to exploit teacher-assisted contexts. However, this early improvement does not translate effectively into unassisted ability in the later stage. The resulting performance drop is substantial, especially with the stronger teacher, where the average decreases from 0.7624 to 0.7057. \textbf{Using a fixed decay factor of 0.8 partially recovers the performance to 0.7406 and 0.7259 under the two teachers, but still underperforms the full reward-decay scheme.} This supports the assisted reward-hacking concern: teacher guidance can quickly improve assisted correctness, but reward decay is needed to turn those guided repairs into independent reasoning ability.

\subsection{Additional Analysis}

\paragraph{Teacher access to ground truth.}
We study whether the teacher should receive ground-truth answers. Although such access seems useful for generating precise feedback, Figure~\ref{fig:ground_truth} shows that it can degrade student performance in most cases. We attribute this to two factors. First, even when the prompt explicitly asks the teacher not to reveal the answer, a teacher with ground-truth access may still leak solution-specific hints. This failure mode is inherently limited by the teacher model's instruction-following ability. Second, ground-truth access can shift the teacher from diagnosing the student's reasoning process to reverse-engineering hints from the known answer. Both effects may encourage shortcut learning rather than robust correction.

At the same time, ground-truth information is not intrinsically harmful. If the teacher can avoid answer leakage and use the reference solution to diagnose the student's error, then access to the answer or even a standard reasoning process can substantially improve the quality of guidance, as suggested by the Qwen3-8B-Instruct self-guidance result on Chemistry. The effect of ground-truth access is therefore conditional: \textbf{it can strengthen Socratic guidance when the teacher follows the teaching protocol well, but it can become negative when the teacher turns the answer into overly direct hints}.
\begin{figure}[t]
    \centering
    \includegraphics[width=0.8\linewidth]{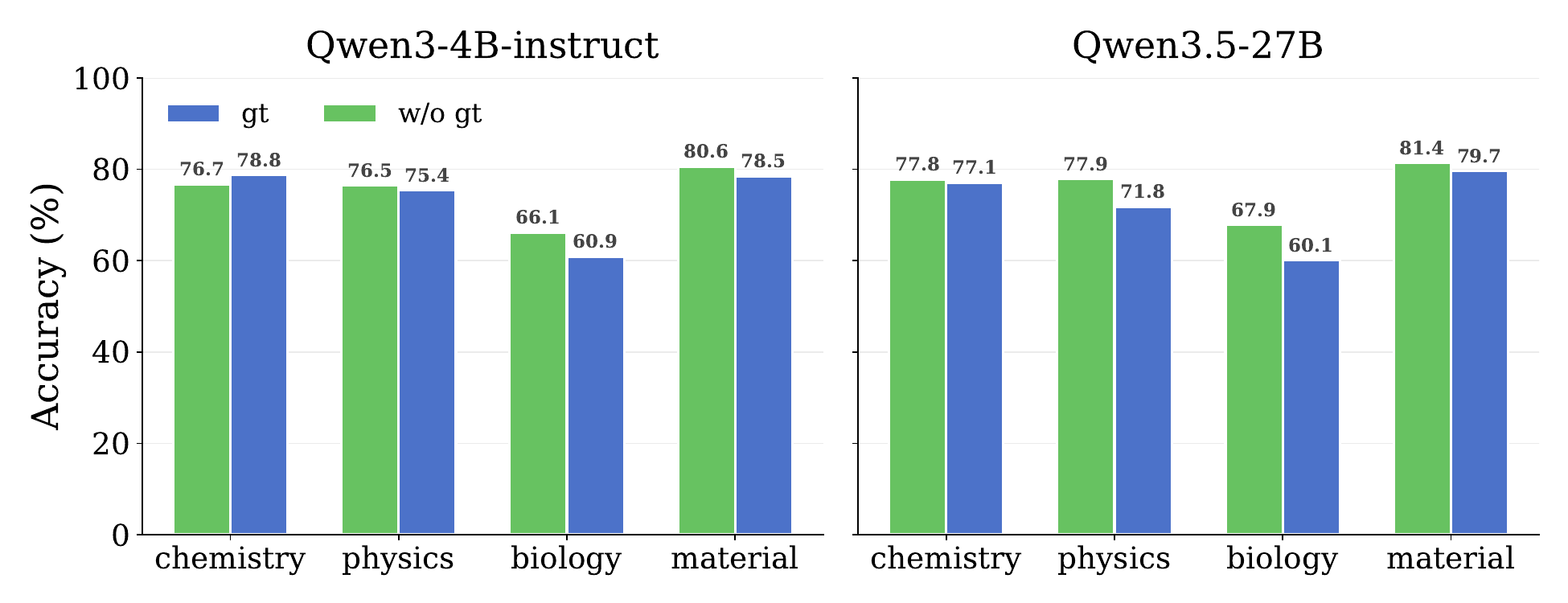}
    \caption{Effect of teacher access to ground-truth information.}
    \vspace{-3pt}
    \label{fig:ground_truth}
\end{figure}

\begin{wrapfigure}{r}{0.4\textwidth}
    \vspace{-2.8em}
    \centering
    \includegraphics[width=\linewidth]{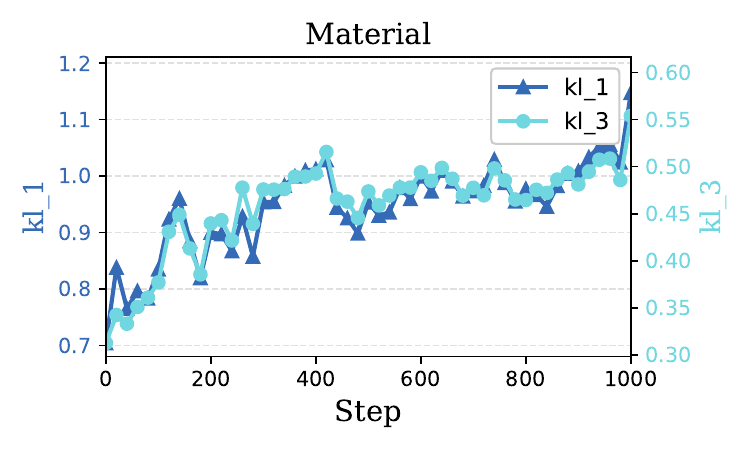}
    \vspace{-2.8em}
    \caption{Teacher--student KL divergence under SocraticPO on Material.}
    \label{fig:teacher_student_kl}
    \vspace{-1.8em}
\end{wrapfigure}
\paragraph{Teacher--student similarity.}
To examine whether SocraticPO works by implicitly making the student imitate the teacher, we measure the KL divergence between the student and teacher during training. We use \textbf{Qwen3-8B} as the teacher and \textbf{Qwen3-4B-Instruct} as the student on the Material benchmark. This pair is chosen because KL computation requires aligned tokenization, and the two models share a compatible tokenizer.

If SocraticPO were primarily a distillation mechanism, we would expect teacher--student divergence to decrease as training proceeds. Figure~\ref{fig:teacher_student_kl} shows that SocraticPO does not consistently reduce the KL divergence between teacher and student, even though it improves downstream task performance. This supports the distinction made in our method: the teacher participates in rollout as an interactive guide, but the student is not optimized to reproduce the teacher's token distribution or reasoning trace. In other words, SocraticPO improves the student through guided correction rather than by making the student become more teacher-like.

Together, these analyses suggest that effective teaching protocols require more than a stronger teacher. \textbf{The information exposed to the teacher, the form of feedback, and the training objective jointly determine whether interaction improves policy learning.}

\section{Conclusion}
\label{sec:conclusion}

We presented \textbf{SocraticPO}, a policy-optimization framework that addresses a limitation of scalar reward feedback in LLM reinforcement learning. Outcome rewards indicate whether a response should be reinforced, but they do not explain how a mistaken reasoning process should be repaired. SocraticPO augments rollout sampling with natural-language teacher guidance: the student first attempts a solution, receives targeted correction only after an incorrect response, and continues under the expanded dialogue context.

Teacher guidance must be controlled carefully. If assisted correct answers are rewarded as strongly as independent correct answers, the student may exploit teacher intervention instead of improving its own reasoning. SocraticPO therefore pairs guidance with reward decay, assigning smaller rewards to answers that become correct only after more interaction rounds. Experiments on SciKnowEval scientific reasoning benchmarks show that SocraticPO improves average performance over strong RL and self-distillation baselines, while ablations highlight the importance of both targeted guidance and reward decay.

Our findings suggest that natural-language correction can serve as process-level supervision inside RL without requiring teacher distribution matching. Because the teacher is used only as a text-producing guide, SocraticPO can leverage black-box mentors and remains modular with respect to the underlying RL optimizer. Future work can further study teaching protocols, teacher-side information, and broader tasks where reasoning errors are diagnosable before the final answer.

\bibliographystyle{plainnat}
\bibliography{main}
\newpage
\appendix
\section{Proof of Theorem 1}
\label{app:proof_advantage}
\begin{proof}
We prove the three properties in Theorem~\ref{thm:assistance_advantage}. Let $m_k=|\mathcal A_k|$ and
\begin{equation}
p_k=\frac{1}{m_k}\sum_{i\in\mathcal A_k}\delta_i^{(k)}.
\end{equation}
Let $N_{k-1}=\sum_{j=1}^{k-1}|\mathcal A_j|$. The batch mean after including the step-$k$ active samples is
\begin{equation}
\mu_k=
\frac{N_{k-1}\mu_{k-1}+\sum_{i\in\mathcal A_k}r_i^{(k)}}{N_{k-1}+m_k}.
\end{equation}
By the reward definition,
\begin{equation}
\sum_{i\in\mathcal A_k}r_i^{(k)}
=
\mu_{k-1}\sum_{i\in\mathcal A_k}\delta_i^{(k)}
=
\mu_{k-1}m_kp_k.
\end{equation}
Thus,
\begin{equation}
\mu_k
=
\frac{N_{k-1}\mu_{k-1}+m_kp_k\mu_{k-1}}{N_{k-1}+m_k}
=
\mu_{k-1}
\frac{N_{k-1}+m_kp_k}{N_{k-1}+m_k}.
\end{equation}
For any correct sample $i$ at step $k$, we have $\delta_i^{(k)}=1$ and therefore $r_i^{(k)}=\mu_{k-1}$. Its centered advantage is
\begin{equation}
a_i^{(k)}
=
r_i^{(k)}-\mu_t
=
\mu_{k-1}-\mu_t
=
\mu_{k-1}
\left(
1-\frac{N_{k-1}+\sum_{h=k}^{t}m_kp_k}{N_{k-1}+\sum_{h=k}^{t}m_k}
\right)
\geq 0.
\end{equation}
The normalized advantage has the same sign because $\sigma_t+\epsilon>0$, so $\hat a_i^{(k)}\ge 0$. This proves the first property.

For the second property, consider two correct samples at steps $k$ and $l$ with $k<l$. 
\begin{equation}
    a_i^{(k)}-a_j^{(l)}=r_i^{(k)}-\mu_t-(r_j^{(l)}-\mu_t)=\mu_{k-1}-\mu_{l-1}=\mu_{k-1}\left(1-\frac{N_{k-1}+\sum_{h=k}^{l-1}m_hp_h}{N_{k-1}+\sum_{h=k}^{l-1}m_h}\right).
\end{equation}
Consider that $l>k$, according to the definition of rollout, it is inevitable that $p_k<1$, then $a_i^{(k)}-a_j^{(l)}>0$, which lead to $\hat a_i^{(k)}\ge \hat{a}_j^{(l)}$. 

For the third property, the expression above shows that the centered advantage for a correct sample at step $t$ is proportional to $1-p_t$:
\begin{equation}
a_i^{(t)}  = 
\frac{\mu_{t-1}m_t}{N_{t-1}+m_t}(1-p_t)
\end{equation}
Therefore, when $p_k\rightarrow 1$, i.e., almost all active samples are corrected at that step, we have $a_i^{(k)}\rightarrow 0$. Since $\sigma_t+\epsilon>\epsilon>0$,  the normalized advantage also satisfies $\hat a_i^{(t)}\rightarrow 0$.
\end{proof}
\section{Detailed Experimental Settings}
\label{app:experimental_settings}

\subsection{Example Interaction Trace}
\label{app:interaction_trace}

The following trace shows one SocraticPO interaction from the Material domain. Ellipses indicate text omitted for brevity while preserving the original dialogue structure.

\begin{quote}
\small
\noindent\rule{\linewidth}{0.4pt}
\textbf{Task instruction.} Given a question and four options, select the right answer. Respond with
\texttt{<reasoning> ... </reasoning>} and \texttt{<answer> ... </answer>}. For the answer, only output the letter corresponding to the correct option.

\vspace{0.4em}
\noindent\textbf{Question.} Based on the given data, determine which material is most favorable for the stability of perovskite materials. The stability can be predicted by element properties such as density and first ionization potential.

\vspace{0.2em}
\noindent\textbf{Given data.}

Y4Sr4Fe2Co6O24: Density 6.07425, FIP 6.95;

Y4Sr4Fe2Ni6O24: Density 6.07425, FIP 6.865625;

Y4Sr4Mn1Co7O24: Density 6.11125, FIP 6.9221875;

Y4Sr4Mn1Ni7O24: Density 7.61475, FIP 8.3265625.

\vspace{0.2em}
\noindent\textbf{Options.}

A: Y4Sr4Mn1Co7O24;

B: Y4Sr4Mn1Ni7O24;

C: Y4Sr4Fe2Co6O24;

D: Y4Sr4Fe2Ni6O24.

\vspace{0.4em}
\noindent\textbf{Student attempt 1.}
\texttt{<reasoning>}
To determine which perovskite material is most favorable for stability, we are given two key properties: density and first ionization potential.
... Therefore, the best choice is C.
\texttt{</reasoning>}
\texttt{<answer>C</answer>}

\vspace{0.4em}
\noindent\textbf{Teacher feedback 1.}
Your reasoning contains a critical contradiction in Step 2. You stated that ``Material C also has a slightly higher density (6.111) than Material D,'' but the data table shows Material C has density 6.07425, while Material A is the one with 6.11125. ...
\textit{Next Step}: (1) Re-check the density values for each material. (2) Re-evaluate the trend: ...

\vspace{0.4em}
\noindent\textbf{Student attempt 2.}
\texttt{<reasoning>}
I apologize for the earlier error. Let's carefully re-examine the data and correct the reasoning step by step.
... While high density alone is not a guarantee, in combination with high FIP, it suggests a stable, tightly bound structure.
Final decision: Material B has the best combination of high ionization potential and high density.
Therefore, the most favorable for stability is B.
\texttt{</reasoning>}
\texttt{<answer>B</answer>}

\vspace{0.4em}
\noindent\textbf{Verifier outcome.} Attempt 1 is incorrect; attempt 2 is correct. The trajectory is solved at interaction step 2.
\noindent\rule{\linewidth}{0.4pt}
\end{quote}

\subsection{Teacher Prompt Templates}
\label{app:prompt_templates}
\begin{quote}
\small
\noindent\rule{\linewidth}{0.4pt}
You are an expert teacher. Student's answer is incorrect. Please review the student's latest answer and provide concise, actionable guidance. Prefer hints, corrections, and next-step suggestions over directly revealing the answer.

\textbf{TEACHING PRINCIPLE:}
\begin{itemize}
    \item Point out the main issue in the student's answer.
    \item Give the next step or correction they should try.
    \item Do not solve the full problem for them unless explicitly allowed.
    \item Keep the guidance short and concrete.
\end{itemize}
\textbf{Question:} \{question\} \newline
\textbf{Student's latest answer:} \{student\_answer\} \newline
\textbf{Reference solution:} \{reference\_solution\} \newline
\textbf{Ground-truth block:} \{ground\_truth\_block\} \newline
\textbf{Output Format:} Write a short piece of guidance for the student.\newline
\noindent\rule{\linewidth}{0.4pt}
\end{quote}
\newpage
\subsection{Dataset Statistics}
\label{app:dataset_statistics}
We use the reasoning subsets of SciKnowEval in four scientific domains. Table~\ref{tab:data_stats} reports the number of training and evaluation examples in each domain.
\begin{table}[h]
\centering
\caption{Sciknoweval dataset statistics.}
\label{tab:data_stats}
\resizebox{0.6\textwidth}{!}{
\begin{tabular}{lccc}
\toprule
Domain & Train examples & Evaluation examples & Total \\
\midrule
Chemistry & 1,890 & 210 & 2100 \\
Physics & 720 & 80 & 800 \\
Biology & 450 & 50 & 500 \\
Material & 841 & 94 & 935 \\
\bottomrule
\end{tabular}}
\end{table}

Table~\ref{tab:benchmark_examples} shows representative examples from the four benchmark domains. Each task is formatted as a multiple-choice question, and the model is evaluated by the final option letter.

\begin{table}[h]
\centering
\caption{Representative examples from the SciKnowEval reasoning subsets.}
\label{tab:benchmark_examples}
\small
\begin{tabular}{p{0.12\textwidth}p{0.72\textwidth}p{0.08\textwidth}}
\toprule
Benchmark & Question (excerpt) & Answer \\
\midrule
Chemistry &
Which of the following molecules is the only correct reactant used in the synthesis of the product "Nc1ccccc1S(=O)(=O)N1CC=CC1" ?

A: O=[N+]([O-])c1ccccc1S(=O)(=O)N1CC=CC1 

B: O=[N+]([O-])c1ccccc1S(=O)(=O)NC1CCCCC1 

C: O=[N+]([O-])c1ccccc1S(=O)(=O)Cl 

D: O=[N+]([O-])c1ccccc1S(=O)(=O)NC1CCC1
& B \\
\midrule
Physics &
What is the rotational period of the Milky Way's galactic halo, given its average radius of 77122 light-years? Please note that the following content involves constants might be used in the calculations :pi = 3.14 ; velocity of light: c = 3e8 m/s ;light year = 9.461e15 m. Additionally, please select the option from the given choices that you believe is closest to the correct answer!

A: 6.61e+08 years \quad B: 4.97e+08 years \quad C: 1.7e+08 years    \quad D: 4.1e+08 years
& B \\
\midrule
Biology &
With a fitness of 1.0 for the wide-type protein sequence 'MQYKLILNGKTLKGETTTEAVDAATAEKVFKQYANDNGVDGEWTYDDATKTFTVTE', a four-sites random mutation has been carried out to produce the mutant 'MQYKLILNGKTLKGETTTEAVDAATAEKVFKQYANDNGQWMEWTYDDATKTFTHTE'. What is the new fitness value of this mutant?

A: $0.01$ \quad B: $0.022$ \quad C: $0.0$ \quad D: $2.774$
& B \\
\midrule
Materials &
Given the following crystal structure parameters for the material RbLa9(IrO6)4 (Material ID: mp-560657), calculate the volume of the unit cell (in \AA$^3$). Lattice: $a=7.82$, $b=7.82$, $c=17.88$ \AA, $\alpha=\beta=\gamma=90^\circ$.

A: 1025.67 \quad B: 1094.31 \quad C: 1200.45 \quad D: 1150.78
& B \\
\bottomrule
\end{tabular}
\end{table}
\newpage
\subsection{Implementation Details}
\label{app:implementation_details}

Table~\ref{tab:hyperparams} summarizes the main training and inference settings in a grouped format.
All experiments are conducted on a 6-card H100 GPU machine. We allocate 4 GPUs for student-policy training and 2 GPUs for teacher inference during SocraticPO rollouts, so that teacher feedback can be generated online without blocking the student update pipeline as much as possible.

\begin{table}[h]
\centering
\caption{Hyperparameters for Reinforce++, GRPO, SDPO, and SocraticPO. Entries marked ``--'' indicate parameters not applicable to that method.}
\label{tab:hyperparams}
\resizebox{\textwidth}{!}{
\begin{tabular}{lcccc}
\toprule
Parameters & Reinforce++ & GRPO & SDPO & SocraticPO \\
\midrule
\multicolumn{5}{l}{\textbf{General}} \\
Student model & Qwen3-4B-Instruct & Qwen3-4B-Instruct & Qwen3-4B-Instruct & Qwen3-4B-Instruct \\
Teacher model & -- & -- & Qwen3-4B-Instruct & Qwen\{3-4B-Instruct,3.5-27B\} \\
Thinking mode & False & False & False & False \\
\midrule
\multicolumn{5}{l}{\textbf{Data}} \\
Max. prompt length & 8192 & 8192 & 8192 & 8192 \\
Max. response length & 8192 & 8192 & 8192 & 8192 \\
Max. teacher feedback length & -- & -- & -- & 1024 \\
\midrule
\multicolumn{5}{l}{\textbf{Batching}} \\
Question batch size & 32 & 32 & 32 & 32 \\
Mini batch size & 8 & 8 & 8 & 8 \\
Number of rollouts & 8 & 8 & 8 & 8 \\
Maximum interaction steps $K$ & -- & -- & -- & 2 \\
\midrule
\multicolumn{5}{l}{\textbf{Rollout}} \\
Inference engine & VLLM & VLLM & VLLM & VLLM \\
Student temperature & 1.0 & 1.0 & 1.0 & 1.0 \\
Teacher temperature & -- & -- & 1.0 & 0.2 \\
\midrule
\multicolumn{5}{l}{\textbf{Validation}} \\
Number of rollouts & 16 & 16 & 16 & 16 \\
Temperature & 0.6 & 0.6 & 0.6 & 0.6 \\
Top-$p$ & 0.95 & 0.95 & 0.95 & 0.95 \\
\midrule
\multicolumn{5}{l}{\textbf{RL loss}} \\
$\epsilon$-high asymmetric clip & -- & 0.28 & -- & 0.28 \\
Rollout IS clip $(\rho)$ & -- & 2 & -- & 2 \\
KL coefficient & 0.0 & 0.0 & -- & 0.0 \\
\midrule
\multicolumn{5}{l}{\textbf{Training}} \\
Optimizer & AdamW & AdamW & AdamW & AdamW \\
Learning rate & $1\times10^{-6}$ & $1\times10^{-6}$ & $1\times10^{-5}$ & $1\times10^{-6}$ \\
Warmup steps & 10 & 10 & 10 & 10 \\
Weight decay & 0.01 & 0.01 & 0.01 & 0.01 \\
Gradient clip norm & 1.0 & 1.0 & 1.0 & 1.0 \\
\bottomrule
\end{tabular}}
\end{table}

\end{document}